\def\BibTeX{{\rm B\kern-.05em{\sc i\kern-.025em b}\kern-.08em
    T\kern-.1667em\lower.7ex\hbox{E}\kern-.125emX}}
\newtheorem{theorem}{Theorem}
\begin{document}

\title{Foresight of Graph Reinforcement Learning: Latent Permutations Learnt by Gumbel Sinkhorn Network\\
}

\author{\IEEEauthorblockN{Tianqi Shen}
\IEEEauthorblockA{\textit{Department of AeroSpace Information Engineering} \\
\textit{School of Aerospace Engineering, Beihang University}\\
Beijing, China \\
tqshen@buaa.edu.cn}
\and
\IEEEauthorblockN{Hong Zhang}
\IEEEauthorblockA{\textit{Department of AeroSpace Information Engineering} \\
\textit{School of Aerospace Engineering, Beihang University}\\
Beijing, China \\
dmrzhang@buaa.edu.cn}
\and
\IEEEauthorblockN{Ding Yuan}
\IEEEauthorblockA{\textit{Department of AeroSpace Information Engineering} \\
\textit{School of Aerospace Engineering, Beihang University}\\
Beijing, China \\
dyuan@buaa.edu.cn}
\and
\IEEEauthorblockN{Jiaping Xiao}
\IEEEauthorblockA{\textit{School of Mechanical and Aerospace Engineering} \\
\textit{Nanyang Technological University}\\
Singapore, Singapore \\
jiaping001@e.ntu.edu.sg}
\and
\IEEEauthorblockN{Yifan Yang}
\IEEEauthorblockA{\textit{Institute of Artificial Intelligence} \\
\textit{Beihang University}\\
Beijing, China \\
stephenyoung@buaa.edu.cn}
}

\maketitle

\begin{abstract}
Vital importance has necessity to be attached to cooperation in multi-agent environments, as a result of which some reinforcement learning algorithms combined with graph neural networks have been proposed to understand the mutual interplay between agents. However, highly complicated and dynamic multi-agent environments require more ingenious graph neural networks, which can comprehensively represent not only the graph topology structure but also evolution process of the structure due to agents emerging, disappearing and moving. To tackle these difficulties, we propose Gumbel Sinkhorn graph attention reinforcement learning, where a graph attention network highly represents the underlying graph topology structure of the multi-agent environment, and can adapt to the dynamic topology structure of graph better with the help of Gumbel Sinkhorn network by learning latent permutations. Empirically, simulation results show how our proposed graph reinforcement learning methodology outperforms existing methods in the PettingZoo multi-agent environment by learning latent permutations.
\end{abstract}

\begin{IEEEkeywords}
multi-agent reinforcement learning; graph neural network; permutation; Gumbel Sinkhorn network; PettingZoo
\end{IEEEkeywords}

\section{Introduction}
Graph structure has tremendous potential and is crucially important to enable agents to comprehend cooperation in multi-agent environments for many applications, such as flight formation\cite{khan2020graph}, autonomous driving\cite{gammelli2021graph} and signal control\cite{nishi2018traffic}, as a result of which multi-agent reinforcement learning(MARL) facilitated by graph neural networks have been widely exploited. Because each agent is a node of the graph, graph neural network has the ability to gather information from farther agents while direct communication occurs within only nearby agents. However, as shown in Fig. \ref{fig1}, existing graph reinforcement learning methods often trap into two dilemmas. Firstly, to gain more valuable information and more mutual cooperation requires more ingenious graph neural networks. Secondly, the critical omission in existing graph reinforcement learning methods is that the graph at the next time step can hardly be efficiently predicted from only current time step. It is imaginable that the prediction is significant to improve the robustness of graph reinforcement learning when dealing with high dynamic multi-agent environments.
\begin{figure}[htbp]
\centerline{\includegraphics[width=6.944cm, height=7.364cm]{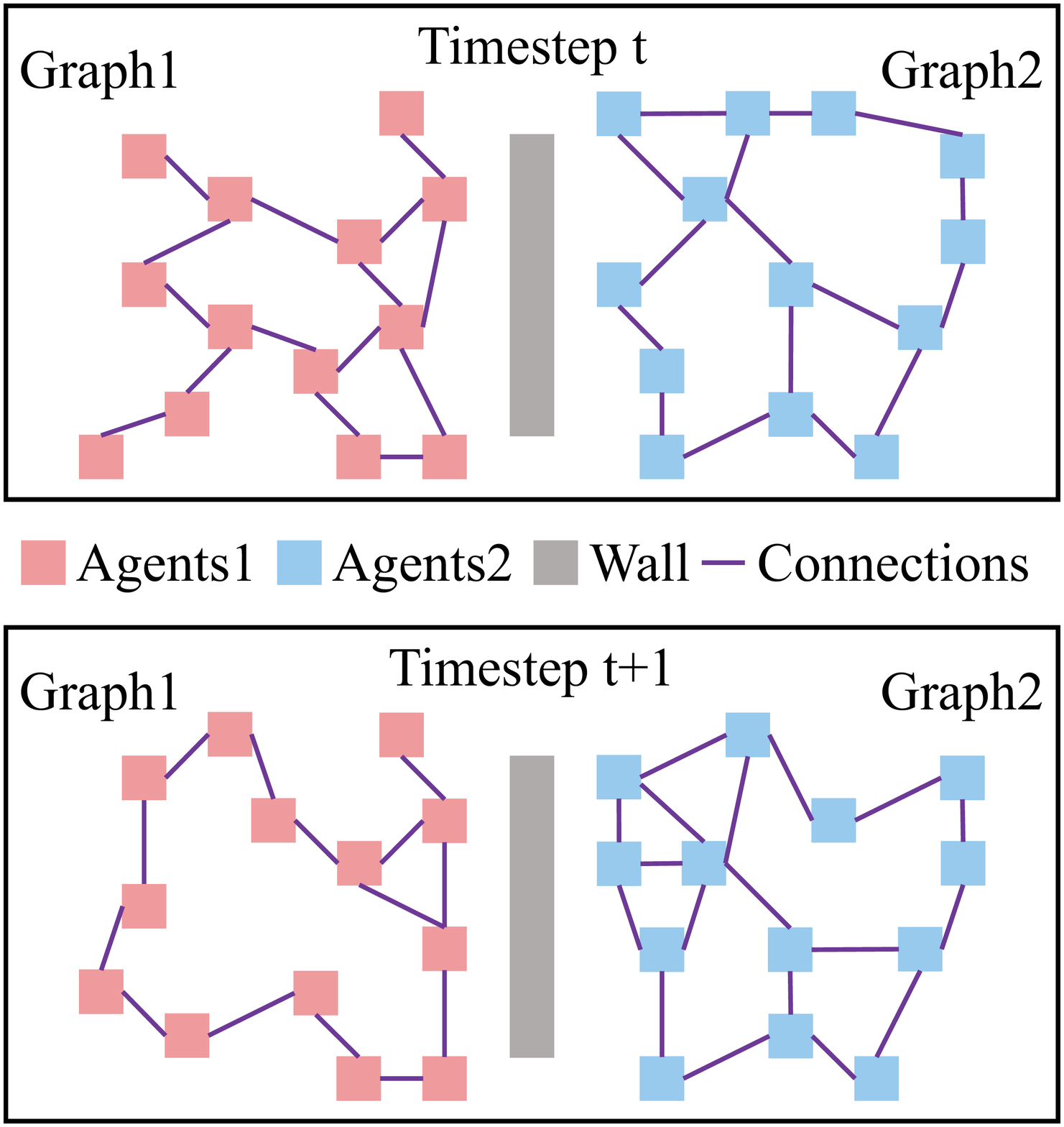}}
\caption{Dynamic graph structure in multi-agent environment.}
\label{fig1}
\end{figure}
In this paper, Gumbel Sinkhorn graph attention reinforcement learning (GS-GAT) is proposed to enhance mutual cooperation and adapt dynamic change in multi-agent environments. GS-GAT is instantiated based on the deep Q network, namely DQN\cite{mnih2015human}, and end-to-end trained. Parameters in the model are shared among all agents to making it easy to scale. The off-policy of DQN requires two networks to predict rewards at current time step and those at next time step respectively when taking actions. The former network is named as local network and the latter one as target network. Both the two networks are constructed by a graph attention network, namely GAT\cite{velivckovic2017graph}. Graph, which is highly represented and learnt by GAT in this paper, is modeled in following steps: 1) each agent acts as a node; 2) edges are built between each agent and its $|\mathcal{B}|$ nearest neighbours; 3) feature of each node is the embedding of local observations of the agent. Multi-head attention\cite{vaswani2017attention} utilized in GAT convolves the features from neighboring nodes with receptive fields gradually increasing. This mechanism is exploited to learn cooperative policies. Assistance is obtained from Gumbel Sinkhorn network\cite{mena2018learning} to enable GAT adapting dynamic graph structure. Latent permutations, which represent the transition from output of local network to that of target network, can be learnt by Gumbel Sinkhorn network and employed as foresight of agents to predict graph topology structure at next time step. We empirically show the learning effectiveness of GS-GAT in Gather game and Battle game in MAgent environment\cite{zheng2018magent}, which is wrapped in PettingZoo environment\cite{terry2020pettingzoo} now. Demonstration is presented that GS-GAT outperforms other graph reinforcement learning methods in a large margin. By ablation studies, advantage of application of Gumbel Sinkhorn network to predict graph topology structure is validated.

\section{Related Work}
\subsection{Multi-Agent Reinforcement Learning(MARL)}
Improvement of multi-agent cooperation were previously considered by sharing policy parameters in PS-TRPO\cite{gupta2017cooperative}, but this sharing was not an efficient way of information sharing among agents. Networked multi-agent MDP\cite{zhang2018fully} proposed sharing parameters of value function rather than policy with the convergence guarantee of linear function approximation. However, this sharing method and its convergence is base on the assumption of fully observable environments. Value propagation is proposed\cite{qu2019value} for networked MARL, which uses softmax temporal consistency to connect value and policy updates. However, this method only works on networked agents with static connectivity. To deal with complicatedly dynamic structure of a graph, graph convolution networks and permutation equivariance of graph convolution are introduced into information propagation\cite{khan2020graph}. However, the permutation equivariance can be easily broken with the change of graph topology structure and the graph convolution network is not expressive for graph structure. DGN\cite{jiang2018graph} constructs graph structure more complex by using relation kernel built by multi-head attention mechanism. However, foresight of agents in multi-agent environments is still not fulfilled.

\subsection{Graph neural network(GNN)}
Graph structure implicitly underlie many applications in real world, such as drug-drug interaction\cite{lin2020kgnn}, multiple object tracking\cite{jiang2019graph} and semantic segmentation\cite{qi20173d}. Several frameworks have been architected to extract locally connected features from arbitrary graphs. A graph convolutional network, namely GCN\cite{niepert2016learning}, takes as inputs the feature matrix that summarizes the attributes of each node and outputs a node-level feature matrix, this process is called aggregation. The function is similar to the convolution operation in CNN, where the kernels are convolved across local regions of the input to produce feature maps. However, expressive ability of GCN is not strong enough to learn these feature maps. GAT introduced multi-head attention mechanism into graph structure and complicate the process of aggregation. One of most important properties of aggregation operation is permutation equivariance\cite{gama2020stability}, which helps simplify the computation of graph network output if graph topology structure is unchangeable. However, permutation equivariance will easily be broken with the change of graph topology structure. Gumbel Sinkhorn network\cite{mena2018learning} proposed a method to learn latent permutation, which can be utilized to rebuild the broken permutation equivariance.

\section{Method}
Graph used in reinforcement learning is built depending on the observations of current multi-agent environments. Each agent is a node of the graph and the edges between them represent the distance of one certain agent and its neighbours. The team is composed of $N$ agents generically indexed by $n$ which at any given point in time $t$ gain a observation $\boldsymbol{o}_{n}^t\in\mathcal{O}$ in configuration space and must choose an action $\boldsymbol{a}_{n}^t\in\mathcal{A}$ in action space. The observation will be preprocessed and embedded into a feature $\boldsymbol{x}_{n}^t\in\mathcal{X}$.

\subsection{Graph Attention Networks}
This paper represents and learns graph structure using graph attention network. Consider a graph $\mathcal{G}_t=(\boldsymbol{V}_t,\boldsymbol{E}_t)$ at timestep $t$ described by a set of $N_t$ nodes denoted $\boldsymbol{V}_t$, and a set of edges denoted $\boldsymbol{E}_t\subseteq\boldsymbol{V}_t\times\boldsymbol{V}_t$. This graph is considered as the support for a data signal $\boldsymbol{x}_t=\left[x_1^t,\cdots,x_N^t\right]^T$ where the value $x_n^t$ is the feature belong to node $n$. The relation between $\boldsymbol{x}_t$ and $\mathcal{G}_t$ is given by a matrix $\boldsymbol{S}_t$ called the graph shift operator\cite{chung1997spectral}. The elements of $\boldsymbol{S}_t$ given as $s_{ij}^t$ represent the sparsity of the graph, i.e.\ $s_{ij}^t=0$, $\forall i\not=j$ and $\left(i,j\right)\notin\boldsymbol{E}_t$.
Valid examples for graph shift operator are the adjacency matrix, the graph laplacian, and the random walk matrix.

$\boldsymbol{S}_t$ in graph attention network defines a map $\boldsymbol{y}_t=\boldsymbol{S}_t\boldsymbol{x}_t$ between graph signals that represents local exchange of information between a node and its one-hop neighbours. More concretely, if the set of neighbours of node $n$ at timestep $t$ is given by $\mathcal{B}_n^t$ then:
\begin{equation}
y_n^t=[\boldsymbol{S}_t\boldsymbol{x}_t]_n=\sum_{j=n,j\in\mathcal{B}_n^t}s_{nj}^tx_n^t
\end{equation}
performs an aggregation of data at node $n$ from its neighbours that are one-hop away. The aggregation of data at all nodes in graph is denoted $\boldsymbol{y}_t=\left[y_1^t,\cdots,y_N^t\right]$. By repeating this operation, one can access information from nodes located further away. Now one can define the spectral $K$-localized graph convolution as\cite{bruna2013spectral}:
\begin{equation}
\boldsymbol{z}_t=\sum_{k=0}^{K}{h_k\boldsymbol{S}_t^k}\boldsymbol{x}_t=\boldsymbol{H}\left(\boldsymbol{S}_t\right)\boldsymbol{x}_t
\end{equation}
where $\boldsymbol{H}\left(\boldsymbol{S}_t\right)=\sum_{k=0}^{\infty}{h_k\boldsymbol{S}_t^k}$ is a linear shift invariant graph filter with coefficients $h_k$. Similar to CNN the output of GCN is fed into a pointwise non-linear function. Thus, the final form of the graph convolution is given as:
\begin{equation}
\boldsymbol{z}_t=\sigma\left(\boldsymbol{H}\left(\boldsymbol{S}_t\right)\boldsymbol{x}_t\right)
\end{equation}

From the spatial perspective, $\boldsymbol{S}_t$ in graph attention network is an ingenious aggregation method. One certain node's neighbours contribute information of different significance, as a result of which graph attention networks take advantage of attention mechanism to compute the different weights of information coming from one certain node's neighbours. Assume that node $j$ and node $k$ are neighbours of node $i$. The attention weight between node $i$ and node $j$ is:
\begin{equation}
a_{ij}^m=\frac{\exp\left(\boldsymbol{W}_Q^mx_i^t\cdot\left(\boldsymbol{W}_K^mx_j^t\right)^T\right)}{\sum_{k\in\mathcal{B}_i}\exp\left(\boldsymbol{W}_Q^mx_i^t\cdot\left(\boldsymbol{W}_K^mx_k^t\right)^T\right)}
\end{equation}
where $m$ means the $m$-th head of the multi-heads attention mechanism and $\boldsymbol{W}_Q^m$, $\boldsymbol{W}_K^m$ are parameter matrices to be learnt. It is obvious that the graph shift operator of graph attention network is not symmetric. The attention from node $i$ to node $j$ is not equal to that from node $j$ to node $i$ because of $\mathcal{B}_i$ is different from $\mathcal{B}_j$. However, due to this characteristic, graph attention network can utilize the information more efficiently by giving full consideration to each agent's neighbours.

\subsection{Permutation Equivariance}
To control $n$ agents, this paper proposes a graph where each robot is a node and the neighbor agents to the certain agent are connected by an edge linked to the certain agent. The agents are all initialized with random policies and by exploring different actions and then they start to learn what policies best optimize the global reward. Such exploration can change the ordering of the agents and the node features. If the graph topology structure is time invariant, a key property\cite{gama2020stability}, what is called permutation equivariance for graph convolutional filters, is also applicable to graph attention networks.

Given a set of permutation matrices:
\begin{equation}
\boldsymbol{\mathcal{P}}=\left\lbrace\boldsymbol{P}\in{\left\lbrace0,1\right\rbrace}^{N\times N}:\boldsymbol{P}\boldsymbol{1}=\boldsymbol{1},\boldsymbol{P}^T\boldsymbol{1}=\boldsymbol{1}\right\rbrace
\end{equation}
where the operation $\boldsymbol{P}\boldsymbol{x}$ permutes the elements of the vector $\boldsymbol x$. It can be shown that: \\
\begin{theorem}\label{theorem1}
Let graph $\mathcal{G}_t=(\boldsymbol{V}_t,\boldsymbol{E}_t)$ be defined with a graph shift operator $\boldsymbol{S}_t$ of graph attention network at timestep $t$. Further, define $\mathcal{G}_{t+1}$ to be the permuted graph with $\boldsymbol{S}_{t+1}={\boldsymbol P}^T{\boldsymbol{S}_t}{\boldsymbol P}$ for ${\boldsymbol P}\in\boldsymbol{\mathcal{P}}$ and any $\boldsymbol{x}_t \in \mathbb{R}^N$ it holds that:
\begin{equation}
\boldsymbol{H}\left(\boldsymbol{S}_{t+1}\right){\boldsymbol P}^T{\boldsymbol{x}_t}={\boldsymbol P}^T\boldsymbol{H}\left(\boldsymbol{S}_t\right){\boldsymbol{x}_t}
\end{equation}
\end{theorem}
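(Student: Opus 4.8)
The plan is to reduce the claim about the whole filter $\boldsymbol{H}\left(\boldsymbol{S}_{t+1}\right)$ to a statement about each individual power $\boldsymbol{S}_{t+1}^k$, exploiting the two structural facts available: the filter is a linear shift-invariant series $\boldsymbol{H}\left(\boldsymbol{S}\right)=\sum_{k=0}^{\infty}h_k\boldsymbol{S}^k$ with fixed coefficients $h_k$, and every $\boldsymbol{P}\in\boldsymbol{\mathcal{P}}$ is orthogonal, i.e.\ $\boldsymbol{P}\boldsymbol{P}^T=\boldsymbol{P}^T\boldsymbol{P}=\boldsymbol{I}$. The latter follows directly from the constraints defining $\boldsymbol{\mathcal{P}}$: a $0/1$ matrix with $\boldsymbol{P}\boldsymbol{1}=\boldsymbol{1}$ and $\boldsymbol{P}^T\boldsymbol{1}=\boldsymbol{1}$ has exactly one $1$ in each row and column, so $\boldsymbol{P}$ merely permutes the canonical basis vectors and its transpose is its inverse.

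First I would establish, by induction on $k$, the identity $\boldsymbol{S}_{t+1}^k=\boldsymbol{P}^T\boldsymbol{S}_t^k\boldsymbol{P}$. The base case $k=0$ reads $\boldsymbol{I}=\boldsymbol{P}^T\boldsymbol{I}\boldsymbol{P}$, which is again orthogonality. For the inductive step, write $\boldsymbol{S}_{t+1}^{k+1}=\boldsymbol{S}_{t+1}^k\boldsymbol{S}_{t+1}=\left(\boldsymbol{P}^T\boldsymbol{S}_t^k\boldsymbol{P}\right)\left(\boldsymbol{P}^T\boldsymbol{S}_t\boldsymbol{P}\right)$ and collapse the interior product $\boldsymbol{P}\boldsymbol{P}^T=\boldsymbol{I}$, which leaves $\boldsymbol{P}^T\boldsymbol{S}_t^{k+1}\boldsymbol{P}$, completing the induction.

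Next I would multiply this identity on the right by $\boldsymbol{P}^T\boldsymbol{x}_t$ and cancel $\boldsymbol{P}\boldsymbol{P}^T=\boldsymbol{I}$ once more, obtaining $\boldsymbol{S}_{t+1}^k\boldsymbol{P}^T\boldsymbol{x}_t=\boldsymbol{P}^T\boldsymbol{S}_t^k\boldsymbol{x}_t$ for every $k$. Finally, scale by $h_k$, sum over $k\ge 0$, and pull the constant matrix $\boldsymbol{P}^T$ out of the sum by linearity, so that $\boldsymbol{H}\left(\boldsymbol{S}_{t+1}\right)\boldsymbol{P}^T\boldsymbol{x}_t=\sum_k h_k\boldsymbol{S}_{t+1}^k\boldsymbol{P}^T\boldsymbol{x}_t=\sum_k h_k\boldsymbol{P}^T\boldsymbol{S}_t^k\boldsymbol{x}_t=\boldsymbol{P}^T\boldsymbol{H}\left(\boldsymbol{S}_t\right)\boldsymbol{x}_t$, which is exactly the asserted equality.

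I do not expect a genuine obstacle: the argument is elementary once orthogonality of permutation matrices is in hand. The only points worth a sentence of care are (i) the derivation of $\boldsymbol{P}\boldsymbol{P}^T=\boldsymbol{I}$ from the defining constraints, and (ii) the interchange of the infinite sum with left-multiplication by the fixed matrix $\boldsymbol{P}^T$, which is immediate since the filter series is assumed to converge for $\boldsymbol{H}\left(\boldsymbol{S}_t\right)$ to be well defined; if one wishes to sidestep convergence altogether, the identical computation goes through verbatim for the finite $K$-localized filter $\boldsymbol{H}\left(\boldsymbol{S}_t\right)=\sum_{k=0}^{K}h_k\boldsymbol{S}_t^k$.
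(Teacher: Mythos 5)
Your proposal is correct and follows essentially the same route as the paper's own proof: establish orthogonality of $\boldsymbol{P}$, derive $\boldsymbol{S}_{t+1}^k=\boldsymbol{P}^T\boldsymbol{S}_t^k\boldsymbol{P}$, and push $\boldsymbol{P}^T$ through the filter sum. You simply supply details the paper leaves implicit (the induction on $k$ and the justification for interchanging the sum with multiplication by $\boldsymbol{P}^T$), which is a welcome tightening rather than a different argument.
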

\begin{proof}
Given that $\boldsymbol P$ is a permutation matrix. This implies $\boldsymbol P$ is also an orthogonal matrix, i.e., $\boldsymbol{P}^T\boldsymbol{P}=\boldsymbol{P}\boldsymbol{P}^T=\boldsymbol I$. Thus,
\begin{equation}
\boldsymbol{S}_{t+1}^k=\boldsymbol{P}^T\boldsymbol{S}_t^k\boldsymbol{P}
\end{equation}
Then,
\begin{equation}
\begin{split}
\boldsymbol{H}\left(\boldsymbol{S}_{t+1}\right){\boldsymbol P}^T{\boldsymbol{x}_t}=\sum_{k=0}^{\infty}{h_k\boldsymbol{S}_{t+1}^k}{\boldsymbol P}^T{\boldsymbol{x}_t} \\
=\sum_{k=0}^{\infty}{h_k\left(\boldsymbol{P}^T\boldsymbol{S}_t^k\boldsymbol{P}\right)}{\boldsymbol P}^T{\boldsymbol{x}_t}={\boldsymbol P}^T\boldsymbol{H}\left(\boldsymbol{S}_t\right)\boldsymbol{x}_t
\end{split}
\end{equation}
\end{proof}

Due to the emerging, disappearing and moving of agents which broke the graph topology structure, the graph shift operator $\boldsymbol{S}_t$ of current time step $t$ steps and the one $\boldsymbol{S}_{t+1}$ will lose their connectivity built on permutation matrix $\boldsymbol{P}$, which means:
\begin{equation}
\boldsymbol{S}_{t+1}\neq\boldsymbol{P}^T\boldsymbol{S}_t\boldsymbol{P}
\end{equation}
This will cause
\begin{equation}
\boldsymbol{H}\left(\boldsymbol{S}_{t+1}\right){\boldsymbol{x}_{t+1}}\neq{\boldsymbol P}^T\boldsymbol{H}\left(\boldsymbol{S}_t\right)\boldsymbol{x}_t
\end{equation}
However, the learning of latent permutation matrix $\boldsymbol{P}$ between two timesteps will solve this problem:
\begin{equation}
\boldsymbol{H}\left(\boldsymbol{S}_{t+1}\right){\boldsymbol{x}_{t+1}}={\boldsymbol P}_{\theta}\boldsymbol{W}_p\boldsymbol{H}\left(\boldsymbol{S}_t\right)\boldsymbol{x}_t
\end{equation}

\subsection{Gumbel Sinkhorn Network}
\begin{figure}[htbp]
\centerline{\includegraphics[width=6.846cm, height=5.428cm]{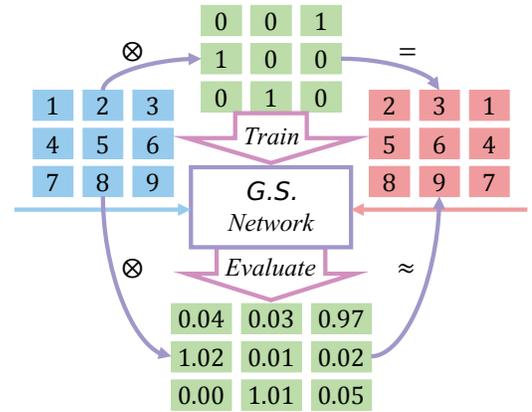}}
\caption{Gumbel Sinkhorn Network.}
\label{fig2}
\end{figure}
Gumbel Sinkhorn is an analog of the Gumbel Softmax distribution for permutations. For the above reason, Gumbel Sinkhorn Network enables learning of the latent permutation matrix referred in the previous content, where is illustrated in Fig. \ref{fig2}.
The non-differentiable parameterization of a permutation can be approximated in terms of a differentiable relaxation, the so-called Sinkhorn operator\cite{adams2011ranking}. The Sinkhorn operator $S(\boldsymbol{X})$ over an $N$ dimensional square matrix $\boldsymbol{X}$ can be expressed as:
\begin{align}
  &S^0(\boldsymbol{X})=\exp(\boldsymbol{X}),S^l(\boldsymbol{X})=\mathcal{T}_c(\mathcal{T}_r(S^{l-1}(\boldsymbol{X})))
\\&S(\boldsymbol{X})=\lim_{l\to \infty}S^l(\boldsymbol{X})
\end{align}

where $\mathcal{T}_r(\boldsymbol{X})=\boldsymbol{X}\oslash(\boldsymbol{X}\boldsymbol{1}_N\boldsymbol{1}_N^T)$ and $\mathcal{T}_c(\boldsymbol{X})=\boldsymbol{X}\oslash(\boldsymbol{1}_N\boldsymbol{1}_N^T\boldsymbol{X})$ as the row and column-wise normalization operators of a matrix, with $\oslash$ denoting the element-wise division and $\boldsymbol{1}_N$ a column vector of ones. The choice of a permutation $\boldsymbol{P}$ can be parameterized through a square matrix $\boldsymbol{X}$, as the solution to the linear assignment problem\cite{kuhn1955hungarian}, with $\boldsymbol{\mathcal{P}}$ denoting the set of permutation matrices and
$\left\langle\boldsymbol{A},\boldsymbol{B}\right\rangle_F=trace(\boldsymbol{A}^T\boldsymbol{B})$ the Frobenius inner product of matrices:
\begin{equation}
M(\boldsymbol{X})=\mathop{\arg\max}_{\boldsymbol{P}\in\boldsymbol{\mathcal{P}}}{\left\langle\boldsymbol{P},\boldsymbol{X}\right\rangle_F}
\end{equation}
where $M(\cdot)$ is called the matching operator, through which we can parameterize the hard choice of a permutation. The difficulty of learning latent permutation is that the $\mathop{\arg\max}$ operator is non-differentiable. However, it has been proved\cite{khan2020graph} that $M(\boldsymbol{X})$ can be obtained as the limit of $S(\boldsymbol{X}/\tau)$, which means:
\begin{equation}
M(\boldsymbol{X})=\lim_{\tau\to0^+}{S(\boldsymbol{X}/\tau)}
\end{equation}
and this theory make the learning of latent permutation differentiable. Based on this conclusion, permutation matrix $\boldsymbol{P}$ can be assumed to follow the Gumbel Sinkhorn distribution with parameter $\boldsymbol{X}$ and temperature $\tau$, denoted $\boldsymbol{P}\backsim\mathcal{G}.\mathcal{S}.(\boldsymbol{X},\tau)$, if it has distribution of $S((\boldsymbol{X}+\epsilon)/\tau)$, where $\epsilon$ is a matrix of standard i.i.d.\ Gumbel noise.

\subsection{Gumbel Sinkhorn Graph Attention Reinforcement Learning}
\begin{figure}[htbp]
\centerline{\includegraphics[width=8.61cm, height=5.236cm]{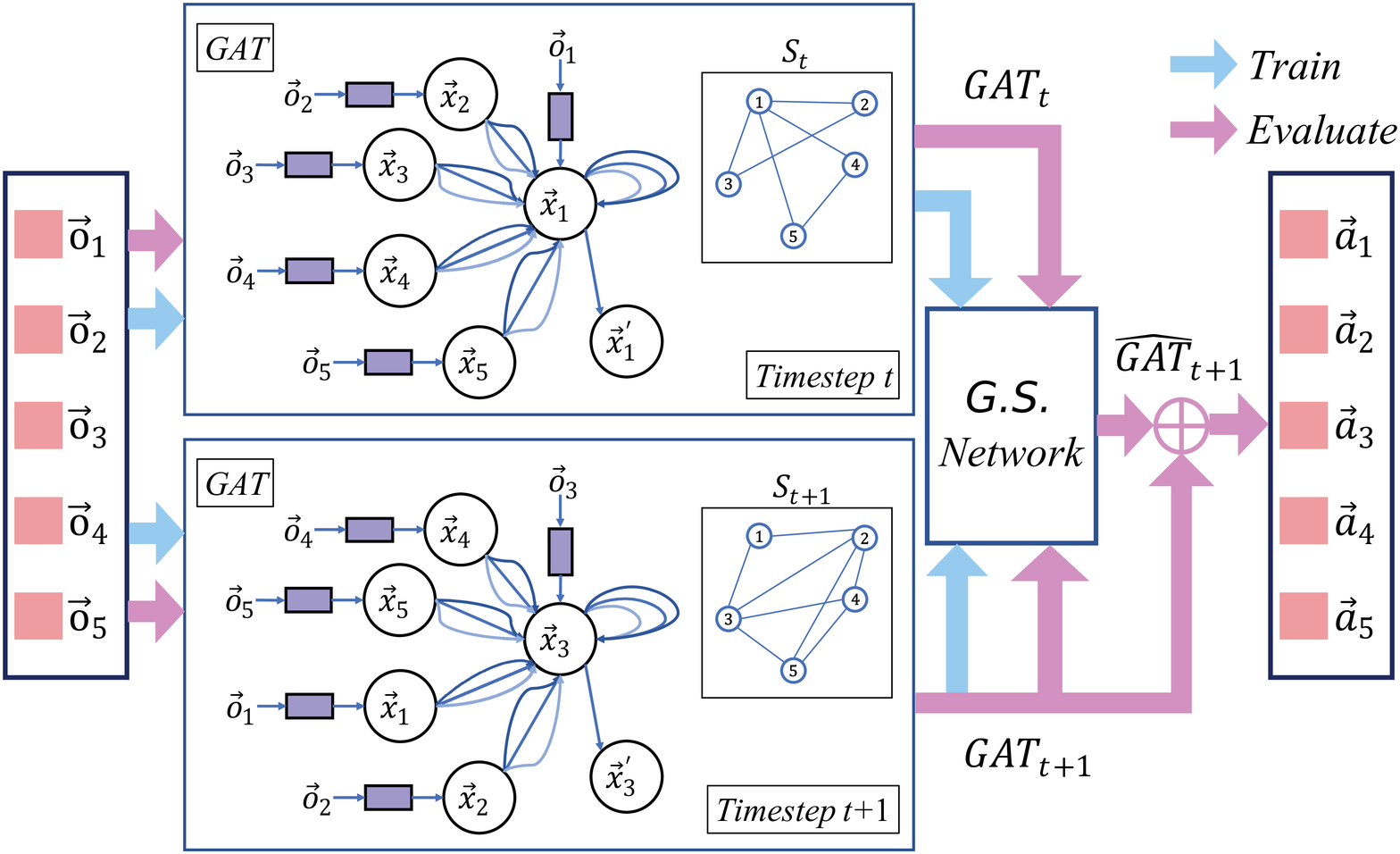}}
\caption{Gumbel Sinkhorn graph attention reinforcement learning.}
\label{fig3}
\end{figure}
Multi-agent reinforcement learning proposed by this paper, which is illustrated with Fig. \ref{fig3}, is developed based on deep Q learning structure, combined with graph attention network and Gumbel Sinkhorn network. Because of the off-policy and time difference in DQN structure, one of the key reinforcement learning trick in DQN is that two different networks are devised to fit the action-value function. Consider two continuous timesteps:
\begin{equation}
\left\langle\boldsymbol{o}_t(\boldsymbol{x}_t),\boldsymbol{a}_t,\boldsymbol{r}_t,\boldsymbol{o}_{t+1}(\boldsymbol{x}_{t+1}),\boldsymbol{a}_{t+1}\boldsymbol{r}_{t+1}\right\rangle
\end{equation}

It means that at timestep $t$ the observation all agents get is $\boldsymbol{o}_t$. The observation is preprocessed and embedded into feature $\boldsymbol{x}_t$. The feature $\boldsymbol{x}_t$ is input into a network named as local network while the output of the network is values of all possible actions. Epsilon greedy strategy\cite{wunder2010classes} is applied to choose an action $\boldsymbol{a}_t$ and the multi-agent environment will feedback reward $\boldsymbol{r}_t$. Difference in the next timestep $t+1$ is that another network named as target network is used to compute values of actions and adopt greedy strategy to take actions.
Both local network and target network use graph attention network, which are denoted as $GAT(\boldsymbol{x}_t,\boldsymbol{a}_t;\theta)$ and $GAT(\boldsymbol{x}_{t+1},\boldsymbol{a}_{t+1};\theta^-)$ respectively. A random minibatch of size $\mathcal{S}$ is sampled from the replay buffer to minimize the loss:
\begin{equation}
\mathcal{L}(\theta)=\frac{1}{\mathcal{S}}\sum_{\mathcal{S}}{\frac{1}{N}\sum_{i=1}^N(TD_{target}-TD_{local})^2}
\end{equation}

\begin{equation}
TD_{target}=r_i^t+\gamma\max_{a_i^t}GAT(\boldsymbol{x}_{t+1},\boldsymbol{a}_{t+1};\theta^-)
\end{equation}

\begin{equation}
TD_{local}=GAT(\boldsymbol{x}_t,\boldsymbol{a}_t;\theta)
\end{equation}
where $TD_{target}$ and $TD_{local}$ indicate concept of time difference in Q-learning algorithm. $\theta^-$ and $\theta$ are parameters of target network and local network respectively. To simplify the formulation, abbreviations are adopted as follows:
\begin{equation}GAT_{t+1}=GAT(\boldsymbol{x}_{t+1},\boldsymbol{a}_{t+1};\theta^-)\end{equation}
\begin{equation}GAT_{t}=GAT(\boldsymbol{x}_t,\boldsymbol{a}_t;\theta)\end{equation}

However, given the broken permutation equivariance in graph attention network discussed above, Gumbel Sinkhorn network is utilized to predict the graph topology structure change. The output of local network is actually $\boldsymbol{H}\left(\boldsymbol{S}_t\right){\boldsymbol{x}_t}$ while the one of target network is actually $\boldsymbol{H}\left(\boldsymbol{S}_{t+1}\right){\boldsymbol{x}_{t+1}}$. To rebuild the broken permutation equivariance is to learn an embedding matrix $\boldsymbol{W}_p$ and a permutation matrix ${\boldsymbol P}_{\theta}$ which satisfy the equation as follows:
\begin{equation}
\boldsymbol{H}\left(\boldsymbol{S}_{t+1}\right){\boldsymbol{x}_{t+1}}={\boldsymbol P}_{\theta}\boldsymbol{W}_p\boldsymbol{H}\left(\boldsymbol{S}_t\right)\boldsymbol{x}_t
\end{equation}

Assume that the Gumbel Sinkhorn network is $\mathcal{G}.\mathcal{S}.(\cdot)$ which takes in two matrices and output the permutation matrix ${\boldsymbol P}_{\theta}$.

Finally, the totally loss $\mathcal{L}(\theta)$ is:
\begin{equation}
\mathcal{L}(\theta)=\frac{1}{\mathcal{S}}\sum_{\mathcal{S}}{\frac{1}{N}\sum_{i=1}^N(\mathcal{L}_{TD}(\theta)+\mathcal{L}_{\mathcal{G}.\mathcal{S}.}(\theta))^2}
\end{equation}
where $\mathcal{L}_{TD}(\theta)$ is the loss to narrow time difference and $\mathcal{L}_{\mathcal{G}.\mathcal{S}.}$ is the loss of Gumbel Sinkhorn network.

$\mathcal{L}_{TD}(\theta)$ is the difference of $TD_{target}$ and $TD_{local}$, which means that:
\begin{equation}
\mathcal{L}_{TD}(\theta)=TD_{target}-TD_{local}
\end{equation}
\begin{equation}
TD_{target}=r_i^t+\gamma\max_{a_i^t}(\alpha GAT_{t+1}+\beta \hat{GAT}_{t+1})
\end{equation}
\begin{equation}TD_{local}=GAT_{t}\end{equation}
where $GAT_{t+1}$ is the actual output of graph attention network at timestep $t+1$. $\hat{GAT}_{t+1}$ is the predicted output of graph attention network at timestep $t+1$, which is computed according to the actual output of graph attention network at timestep $t$ and Gumbel Sinkhorn network. To balance the actual and predicted output, two adjustable hyper-parameters $\alpha$ and $\beta$ are set in the formulation. $\hat{GAT}_{t+1}$ is formulated as follows:
\begin{equation}
\begin{split}
\hat{GAT}_{t+1}&={\boldsymbol P}_{\theta}\boldsymbol{W}_pGAT_{t} \\
&=\mathcal{G}.\mathcal{S}.(GAT_{t}, GAT_{t+1})\boldsymbol{W}_pGAT_{t}
\end{split}
\end{equation}

$\mathcal{L}_{\mathcal{G}.\mathcal{S}.}(\theta)$ is the difference of actual and predicted output of graph attention network at timestep $t+1$, which means that:
\begin{equation}\mathcal{L}_{\mathcal{G}.\mathcal{S}.}(\theta)=\hat{GAT}_{t+1}-GAT_{t+1}
\end{equation}

\section{Experiments}
Experiments have been carried out in a grid-world platform MAgent which is wrapped in the PettingZoo. In the $30\times 30$ grid-world environment, each agent corresponds to one grid and has a local observation. Agents can either move or attack the enemy at each turn. An attack against another agent on their own team will not be registered. Cooperation is investigated among agents in two scenarios, Gather and Battle. Algorithms compared in the experiments are list in TABLE \ref{tab1}.
\begin{table}[htbp]
\caption{Algorithms compared in the experiments}
\begin{center}
\begin{tabular}{c|c|c|c}
\hline
\textbf{order}&\textbf{original al}&\textbf{use $\mathcal{G}.\mathcal{S}.$ network}&\textbf{name} \\
\hline
1&DGN&no&DGN \\
\cline{1-4}
2&DGN&yes&GS-DGN \\
\cline{1-4}
3&GCN&no&GCN \\
\cline{1-4}
4&GCN&yes&GS-GCN \\
\cline{1-4}
5&GAT&no&GAT \\
\cline{1-4}
6&GAT&yes&GS-GAT \\
\hline
\multicolumn{4}{l}{$^{\mathrm{a}}$al is abbreviation of algorithm and baseline al is DGN.}
\end{tabular}
\label{tab1}
\end{center}
\end{table}

For ablation study, we compare DGN with GS-DGN (DGN with Gumbel Sinkhorn Network), compare GCN with GS-GCN (GCN with Gumbel Sinkhorn Network) and compare GAT with GS-GAT (GAT with Gumbel Sinkhorn Network) to validate the latent permutations learnt by Gumbel Sinkhorn network promote the performance of graph reinforcement learning. Moreover, to ensure the comparison is fair, their basic hyperparameters are all the same and their parameter sizes are also similar.

\subsection{Gather Game}
In this scenario, 74 agents learn to seek and eat 157 food. Gather game scene and its reward mechanism are shown in Fig. \ref{fig4}. In detail, agents have the observation space which is $15\times 15$ map with some channels. This gives one certain agent a local view of $15 \times 15$ map of food, which is used to construct the connectivity of graph. Agents are only rewarded for eating food, which needs to be broken down by 5 attacks before it is absorbed.
Since there is finite food on the map, there is competitive pressure between agents over the food. Agents coordinating by not attacking each other until food is scarce is expected to be seen. When food is scarce, agents may attack each other to try to monopolize the food. Agents can kill each other with a single attack.
Given the situation of broking total number of nodes in graph because of death of agents, SuperSuit, a tool kit which introduces a collection of small functions, is used to wrap MAgent environments to do preprocessing. Function ``black\_death\_v2(env)" is called to make dead agents invisible for other agents, which means no link connected to them and their observations are set to zeros. When an episode game has been played over, the death of agents will be counted and life-death ratio is computed by dividing the life of agents with the death of agents.
\begin{figure}[htbp]
\centerline{\includegraphics[width=7.448cm, height=3.482cm]{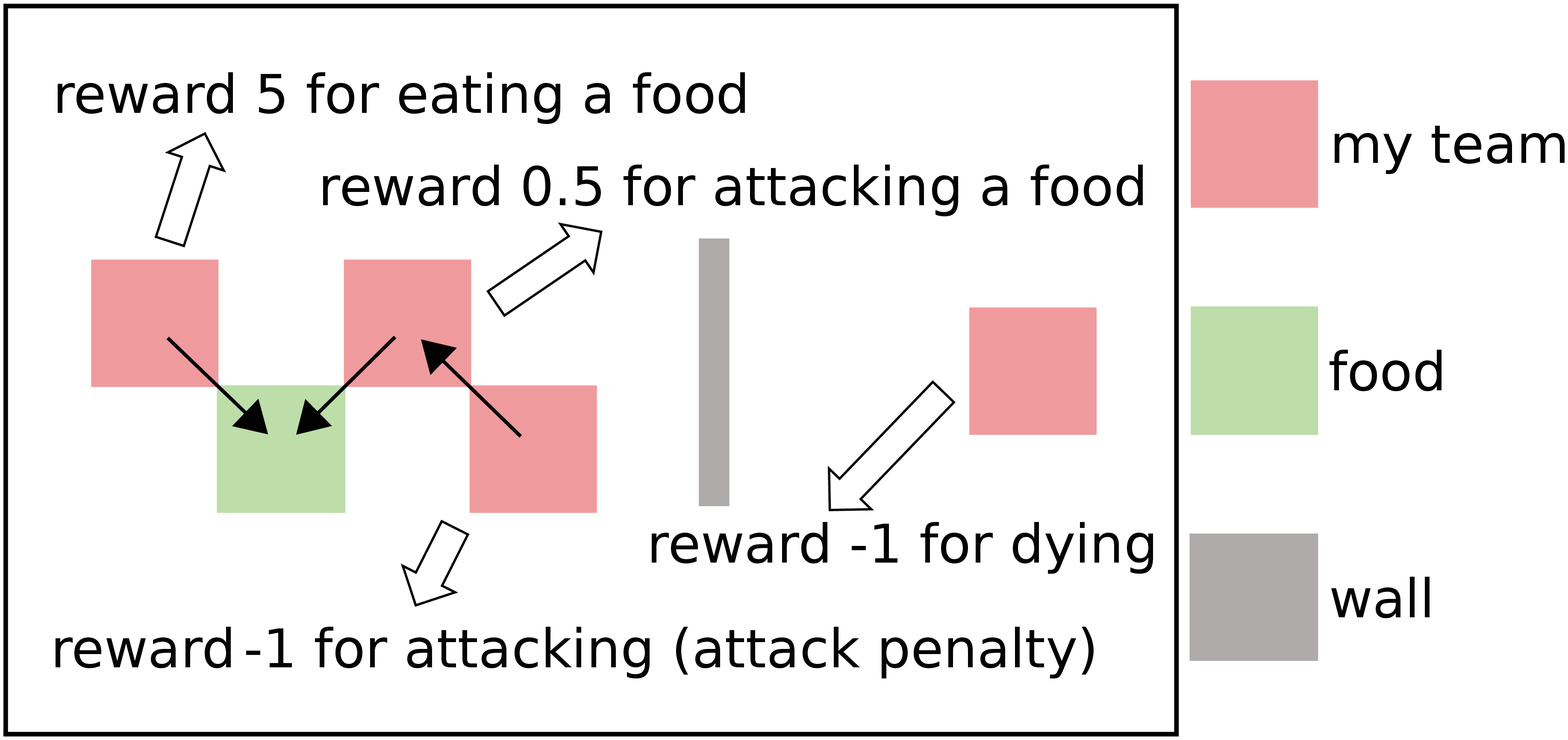}}
\caption{Gather game scene and its reward mechanism.}
\label{fig4}
\end{figure}

\textbf{Result:} Six models listed in TABLE \ref{tab1} are compared together. All models are trained for 511 episodes with the set learning rate 0.001. Epsilon Greedy Strategy is adopted to choose action. Epsilon value $\epsilon$ is set to 0.9 but decays to 0.02 at the rate of 0.05 per episode from the 60th episode. The models will not be trained until the 44th episode. Learning curves in Fig. \ref{fig5} display mean reward gained by all models. Each model experiences three training runs, whose min and max value enclose the shadowed area. Line in middle of shadowed area is the mean value.
\begin{figure}[htbp]
\centerline{\includegraphics[width=8.676cm, height=6.354cm]{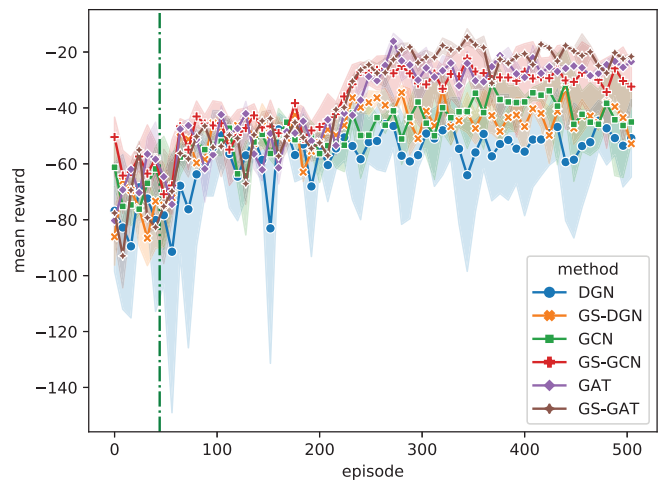}}
\caption{Learning curves of six algorithms.}
\label{fig5}
\end{figure}

Integrated by TABLE \ref{tab2}, summaries are as follows. All algorithms have stable convergence and relatively high rewards, which means that graph neural networks are effectively introduced into reinforcement learning. Similarity of rewards and live-death ratio of GCN and those of DGN shows certain simplicity of this scenario. Rewards gained by GAT are approximate to GS-GCN, which implies that Gumbel Sinkhorn network can improve the expression ability of the model to a certain extent. Nevertheless, the higher live-death ratio of GAT than that of GS-GCN reveals the puissant ability of expression of GAT. Finally, Best performer in Gather game is GS-GAT proposed by this paper with the most rewards and the highest live-death ratio.

\begin{table}[htbp]
\caption{Performance of algorithms in Gather game}
\begin{center}
\begin{tabular}{c|c|c|c|c}
\hline
\textbf{algorithms}&\textbf{mean reward}&\textbf{live}&\textbf{death}&\textbf{live-death ratio} \\
\hline
DGN&-53.78&27&47&0.57 \\
\cline{1-5}
GS-DGN&-42.60&30&44&0.68 \\
\cline{1-5}
GCN&-41.01&30&44&0.68 \\
\cline{1-5}
GS-GCN&-28.92&46&28&1.64 \\
\cline{1-5}
GAT&-25.86&51&23&2.22 \\
\cline{1-5}
GS-GAT&-21.45&56&18&3.11 \\
\hline
\end{tabular}
\label{tab2}
\end{center}
\end{table}

\textbf{Ablation:} To further validate the significance of introducing Gumbel Sinkhorn network, ablation experiments are conducted by comparing DGN and GS-DGN, GCN and GS-GCN, GAT and GS-GAT, whose results are illustrated in Fig. \ref{fig6}.\ref{fig7}.\ref{fig8}.

\begin{figure}[htbp]
\centerline{\includegraphics[width=8.676cm, height=6.354cm]{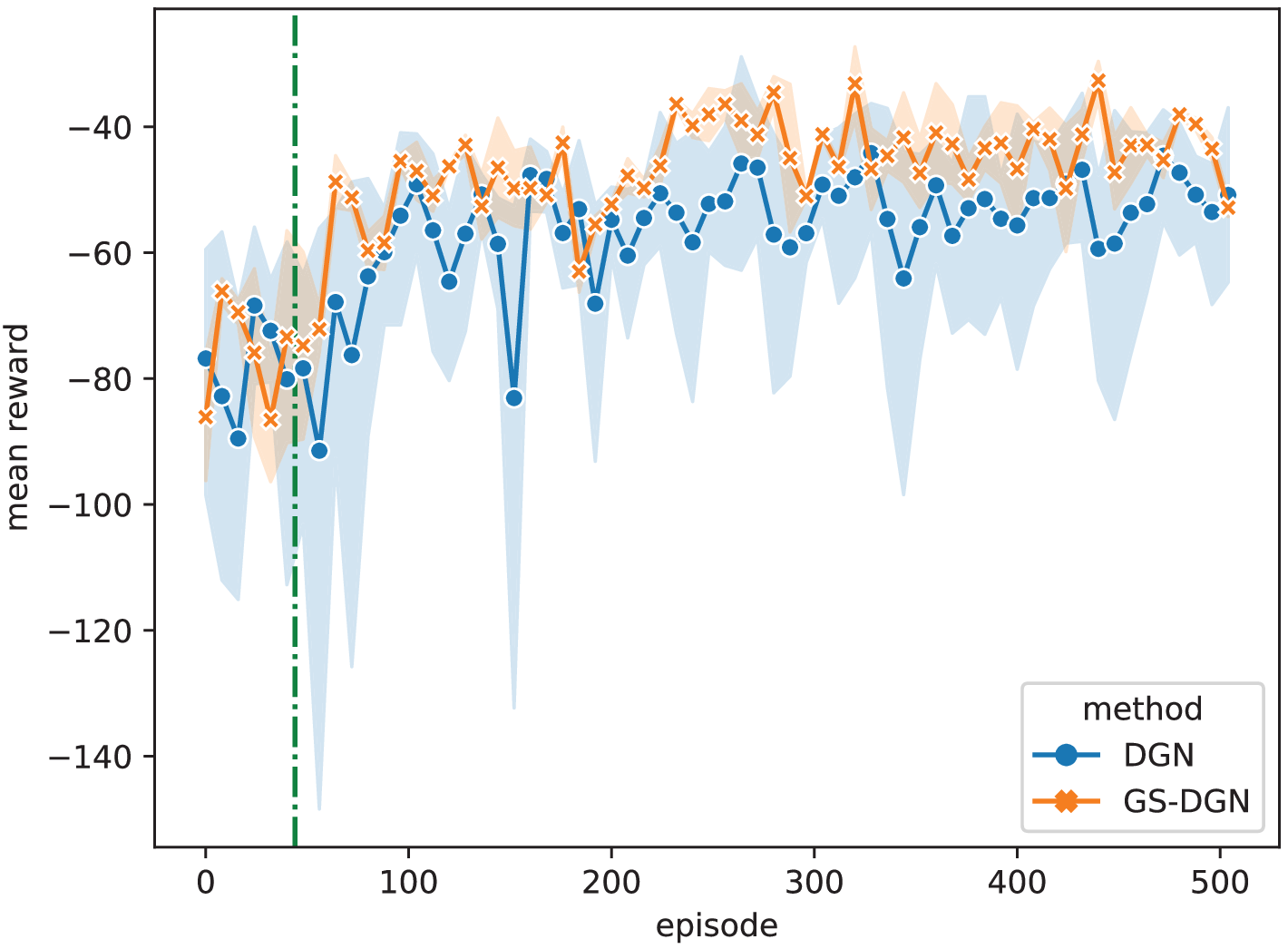}}
\caption{Comparison of learning curves of GS-DGN and DGN.}
\label{fig6}
\end{figure}
\begin{figure}[htbp]
\centerline{\includegraphics[width=8.676cm, height=6.354cm]{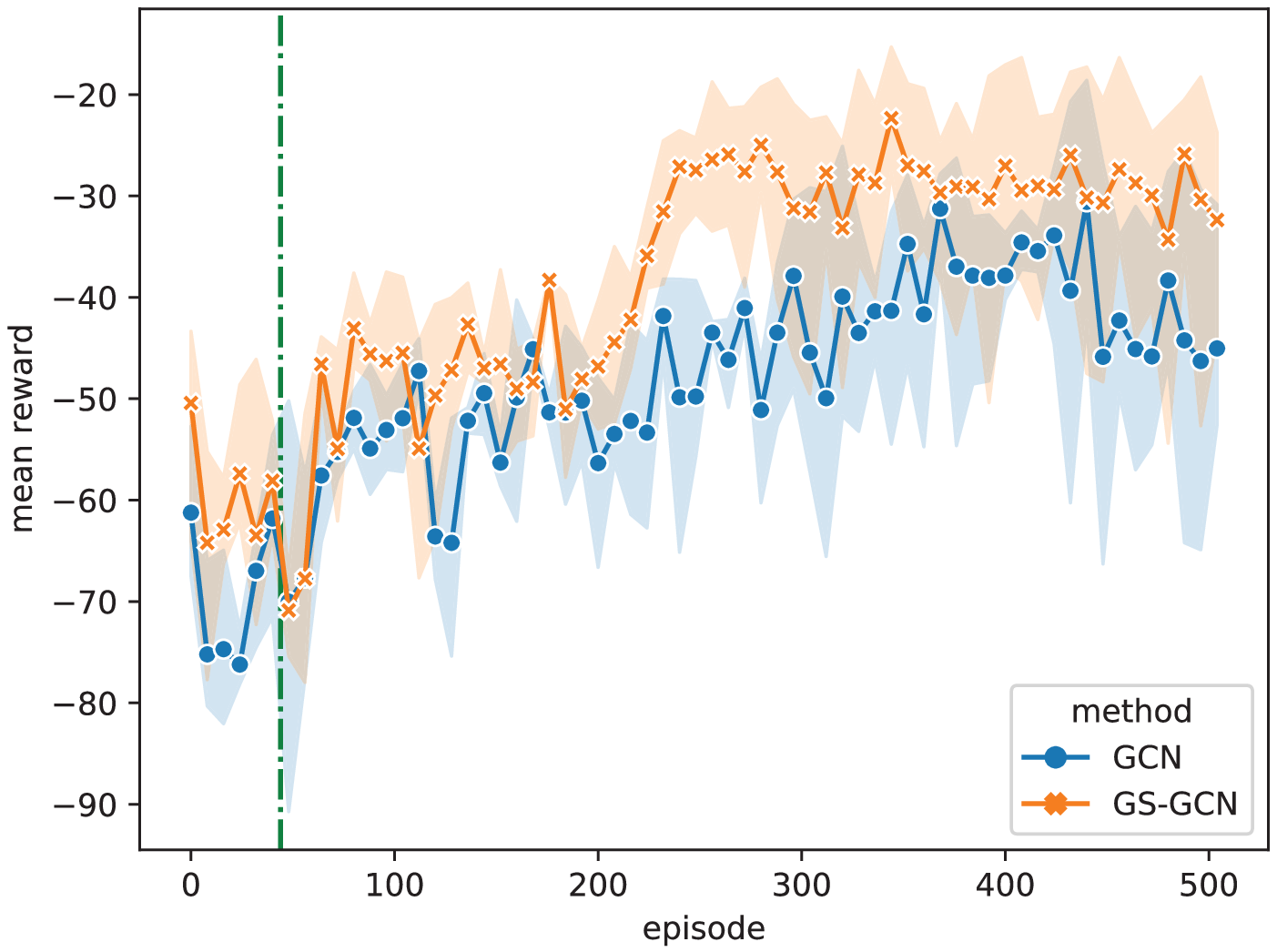}}
\caption{Comparison of learning curves of GS-GCN and GCN.}
\label{fig7}
\end{figure}
\begin{figure}[htbp]
\centerline{\includegraphics[width=8.676cm, height=6.354cm]{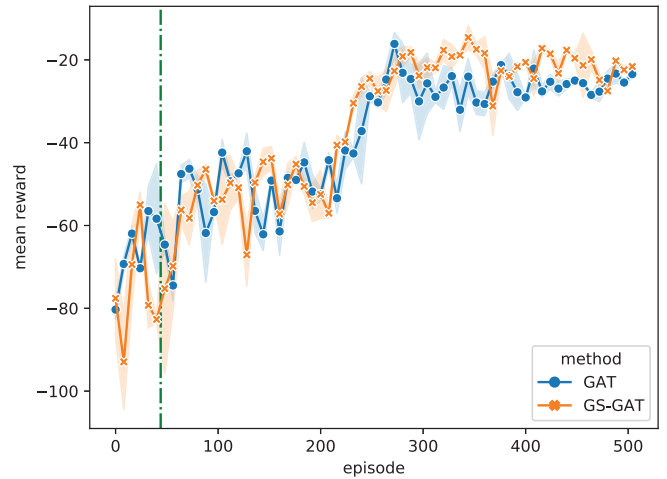}}
\caption{Comparison of learning curves of GS-GAT and GAT.}
\label{fig8}
\end{figure}

Illustration indicates the high rewards gained by GS-DGN than that of DGN. Similar results appear in the comparison of GS-GCN and GCN and that of GS-GAT and GAT. It can be concluded that the latent permutations learnt by Gumbel Sinkhorn network improves the performance of graph reinforcement learning in multi-agent environments.

\subsection{Battle Game}
In this scenario, 30 agents learn to fight against other 30 agents, where the latter team is seen as the enemy of the former team. Both the team have the same abilities. Battle game scene and its reward mechanism are shown in Fig. \ref{fig9}. In detail, they all have the observation space which is $30\times 30$ map with some channels. This gives one certain agent a local view of $30 \times 30$ map of its teammates and enemies, which is used to construct the connectivity of graph. Agents are rewarded for their individual performance, and not for the performance of their neighbors, therefore coordination is difficult for them. Agents slowly regain HP over time, so it is best to kill an opposing agent quickly. Specifically, agents have 10 HP, are damaged 2 HP by each attack, and recover 0.1 HP every turn. Given the situation of broking total number of nodes in graph because of death of agents, SuperSuit, a tool kit which introduces a collection of small functions, is used to wrap MAgent environments to do preprocessing. Function ``black\_death\_v2(env)" is called to make dead agents invisible for other agents, which means no link connected to them and their observations are set to zeros. When an episode game has been played over, the death of both team will be counted and kill-death ratio is computed by dividing the death of other/enemy team with the death of my team. The PPO model trained by stable-baselines3 takes the role of enemy team.
\begin{figure}[htbp]
\centerline{\includegraphics[width=7.448cm, height=3.482cm]{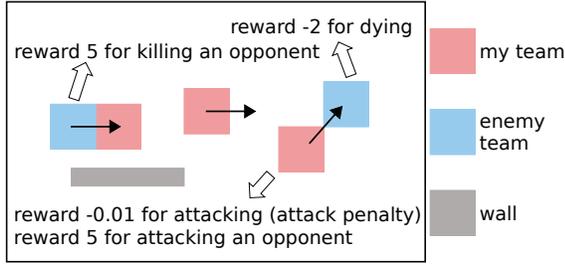}}
\caption{Battle game scene and its reward mechanism.}
\label{fig9}
\end{figure}

\textbf{Result:} Six models listed in TABLE \ref{tab1} are compared together. All models are trained for 511 episodes with the set learning rate 0.001. Epsilon Greedy Strategy is adopted to choose action. Epsilon value $\epsilon$ is set to 0.9 but decays to 0.02 at the rate of 0.05 per episode from the 60th episode. The models will not be trained until the 44th episode. Learning curves in Fig. \ref{fig10} display mean reward gained by all models. Each model experiences three training runs, whose min and max value enclose the shadowed area. Line in middle of shadowed area is the mean value.

\begin{figure}[htbp]
\centerline{\includegraphics[width=8.676cm, height=6.354cm]{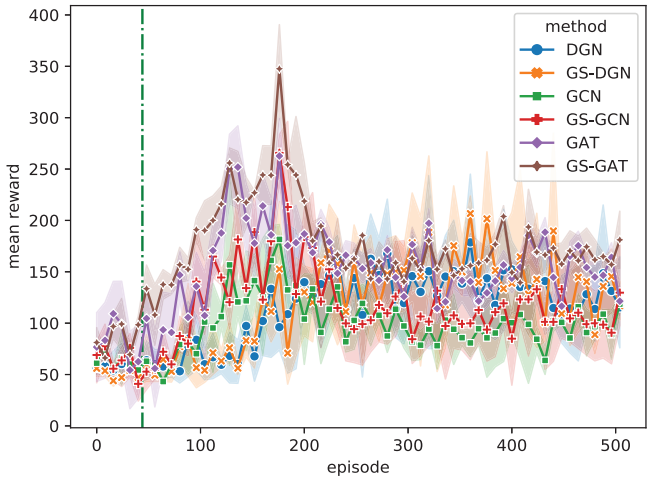}}
\caption{Learning curves of six algorithms.}
\label{fig10}
\end{figure}

Integrated by TABLE \ref{tab3}, summaries are as follows. All algorithms firstly obtain high rewards when begin training but subsequently being poor-performing, which implies the difficulty of this scenario. Consequently they all obtain suboptimal stable convergence but also relatively high rewards which mean that graph neural networks are still effectively introduced into reinforcement learning. Rewards gained by DGN are higher than that gained by GCN and GS-GCN, which means DGN has more comprehensive expression ability in environment with more challenges. Rewards gained by GAT are approximate to GS-DGN and kill-death ratio of GAT is higher than that of GS-DGN. GAT still exhibits the preponderance of adoption of Gumbel Sinkhorn network. GS-GAT achieves more rewards than those of GS-GCN although their same kill-death ratio. Conclusively, Best performer in Battle game is GS-GAT proposed by this paper with the most rewards and the highest kill-death ratio.

\begin{table}[htbp]
\caption{Performance of algorithms in Battle game}
\begin{center}
\begin{tabular}{c|c|c|c|c}
\hline
\textbf{algorithms}&\textbf{mean reward}&\textbf{kill}&\textbf{death}&\textbf{kill-death ratio} \\
\hline
DGN&135.63&10&2&5.00 \\
\cline{1-5}
GS-DGN&148.15&11&2&5.50 \\
\cline{1-5}
GCN&93.18&17&10&1.70 \\
\cline{1-5}
GS-GCN&106.68&11&1&11.00 \\
\cline{1-5}
GAT&152.06&6&1&6.00 \\
\cline{1-5}
GS-GAT&163.95&11&1&11.00 \\
\hline
\end{tabular}
\label{tab3}
\end{center}
\end{table}

\textbf{Ablation:} To further validate the significance of introducing Gumbel Sinkhorn network, ablation experiments are conducted by comparing DGN and GS-DGN, GCN and GS-GCN, GAT and GS-GAT, whose results are illustrated in Fig. \ref{fig11}.\ref{fig12}.\ref{fig13}.

\begin{figure}[htbp]
\centerline{\includegraphics[width=8.676cm, height=6.354cm]{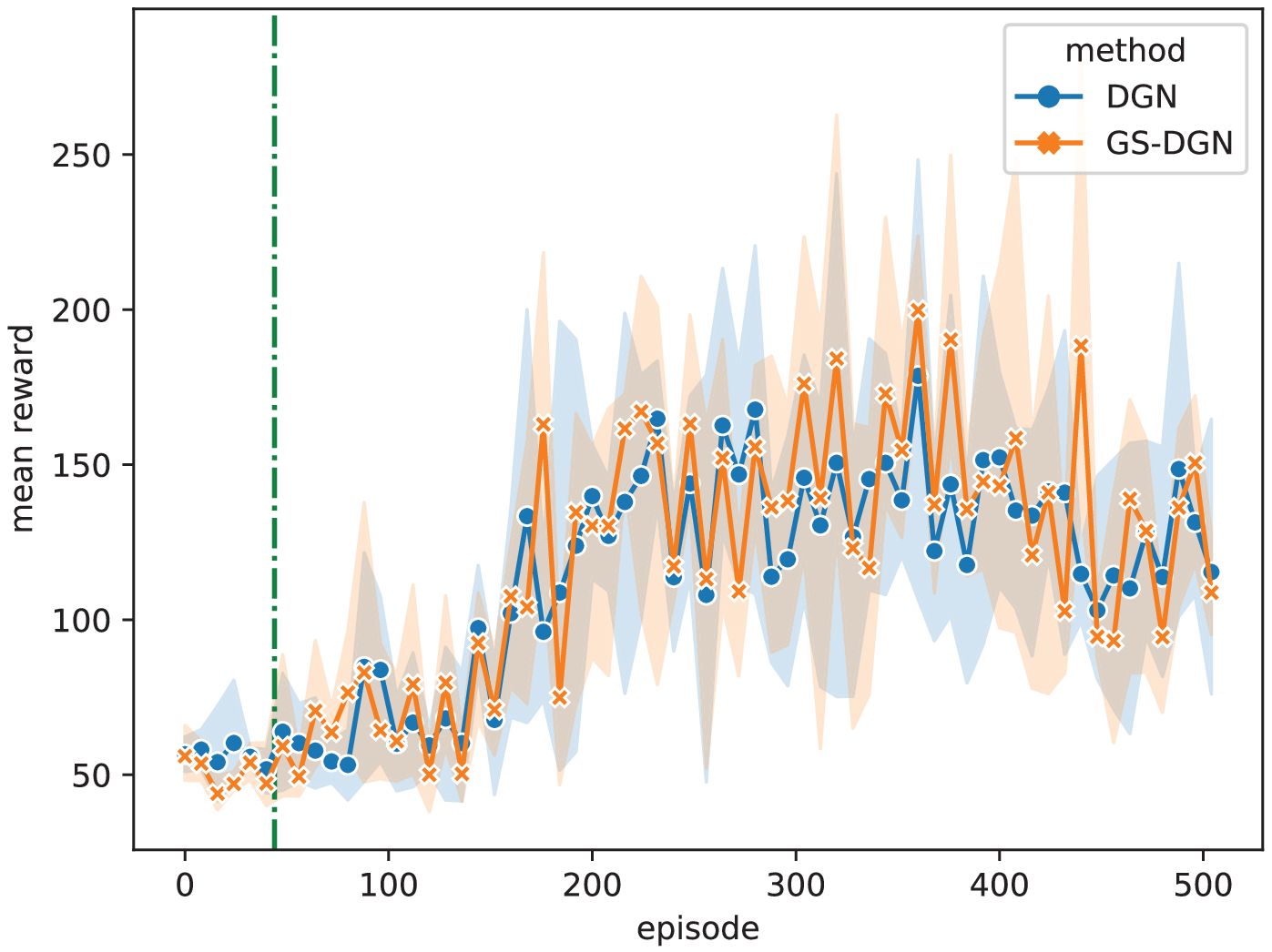}}
\caption{Comparison of learning curves of GS-DGN and DGN.}
\label{fig11}
\end{figure}
\begin{figure}[htbp]
\centerline{\includegraphics[width=8.676cm, height=6.354cm]{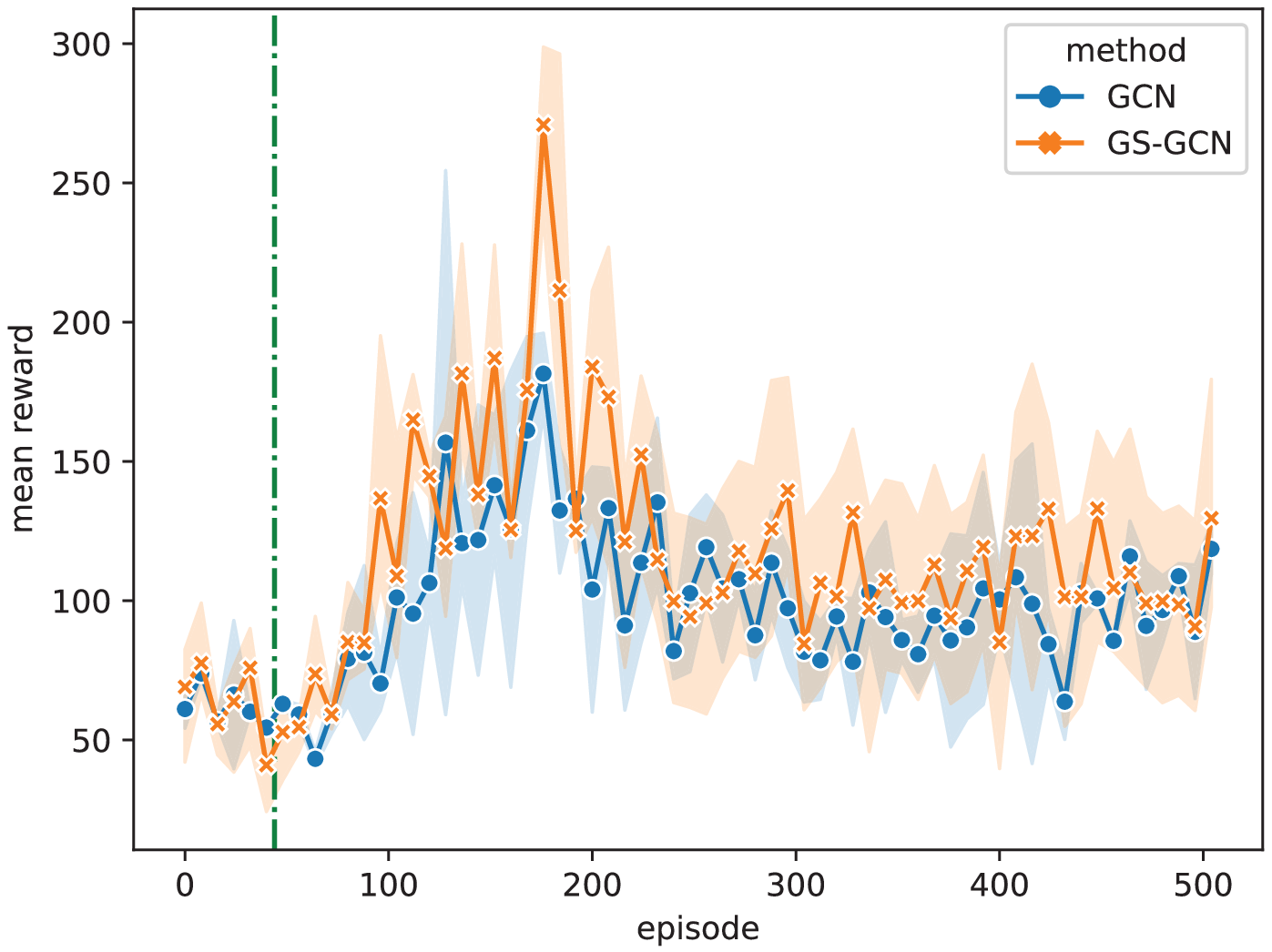}}
\caption{Comparison of learning curves of GS-GCN and GCN.}
\label{fig12}
\end{figure}
\begin{figure}[htbp]
\centerline{\includegraphics[width=8.676cm, height=6.354cm]{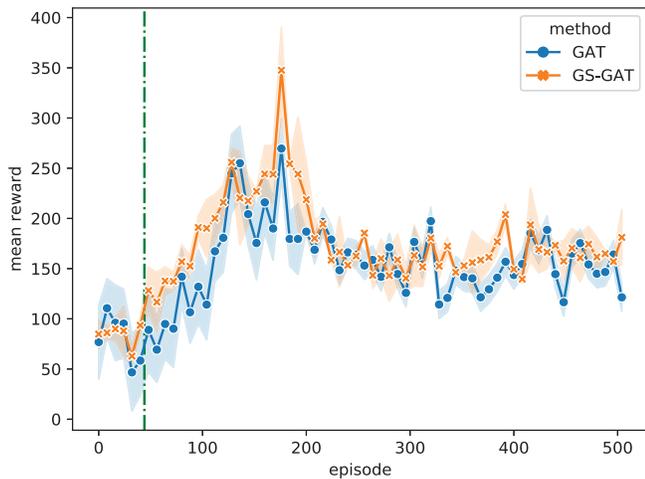}}
\caption{Comparison of learning curves of GS-GAT and GAT.}
\label{fig13}
\end{figure}

Illustration indicates the high rewards gained by GS-DGN than that of DGN. Similar results appear in the comparison of GS-GCN and GCN and that of GS-GAT and GAT. It can also be concluded that the latent permutations learnt by Gumbel Sinkhorn network improves the performance of graph reinforcement learning in multi-agent environments.

\section{Conclusion}
This paper has proposed a gumbel sinkhorn graph attention reinforcement learning. GS-GAT utilizes graph attention network to highly express and deeply learn graph topology structure abstracted from multi-agent environment. Moreover, Gumbel Sinkhorn network which can learn latent permutations is taken advantage of to predict graph topology structure at next timestep. This prediction attaches foresight to agents when cooperating. Empirically, GS-GAT significantly outperforms existing graph reinforcement learning methods in a variety of cooperative multi-agent scenarios.

\section*{Acknowledgment}
This work was supported by the National Natural Science Foundation of China (Nos. 61872019,61972015 and 62002005).

\bibliographystyle{IEEEtran}
\bibliography{IEEEabrv,reference}
\end{document}